\icmltitlerunning{Stochastic Low-Rank Kernel Learning for Regression}
\newtheorem{proposition}{Proposition}
\newtheorem{theorem}{Theorem}
\newtheorem{lemma}{Lemma}
\DeclareMathOperator*{\argmin}{argmin}
\newcommand{\partialmu}[2]{\tfrac{\partial {#1}}{\partial \mu_{#2}}}
\newcommand{\partiallmu}[2]{\tfrac{\partial^2 {#1}}{\partial \mu_{#2}^2}}
\newcommand{\ppartialmu}[3]{\tfrac{\partial^{#3} {#1}}{\partial \mu_{#2}^{#3}}}
\newcommand{\old}{\text{old}}
\newcommand{\new}{\text{new}}
\def\Gnew{G_{\text{new}}}
\def\Gold{G_{\text{old}}}
\def\setS{\mathcal{S}}
\def\rkhs{\mathcal{H}}
\def\realset{\mathbb{R}}
\def\identity{I}
\def\bfc{\boldsymbol{c}}
\def\bfx{\boldsymbol{x}}
\def\bfe{\boldsymbol{e}}
\def\bfg{\boldsymbol{g}}
\def\bfb{\boldsymbol{b}}
\def\bfw{\boldsymbol{w}}
\def\bfm{\boldsymbol{m}}
\def\bfv{\boldsymbol{v}}
\def\bfy{\boldsymbol{y}}
\def\bfalpha{\boldsymbol{\alpha}}
\def\bfmu{\boldsymbol{\mu}}
\def\KtlmInv{\tilde{K}_{\lambda,\bfmu}^{-1} }
\def\SLRKL{{\footnotesize SLKL}}
\newenvironment{equationsize*}[1]{%
  \skip@=\baselineskip 
  #1%
  \baselineskip=\skip@ 
  \equation
}{\nonumber\endequation \ignorespacesafterend} 
\newenvironment{alignsize*}[1]{%
  \skip@=\baselineskip 
  #1%
  \baselineskip=\skip@ 
  \start@align\@ne\st@rredtrue\m@ne
}{\endalign\ignorespacesafterend} 
\begin{document}
\twocolumn[
\icmltitle{Stochastic Low-Rank Kernel Learning for Regression}

\icmlauthor{Pierre Machart}{pierre.machart@lif.univ-mrs.fr}
\icmladdress{LIF, LSIS, CNRS, Aix-Marseille Universit\'e}
\icmlauthor{Thomas Peel}{thomas.peel@lif.univ-mrs.fr}
\icmladdress{LIF, LATP, CNRS, Aix-Marseille Universit\'e}
\icmlauthor{Sandrine Anthoine}{anthoine@cmi.univ-mrs.fr}
\icmladdress{LATP, CNRS, Aix-Marseille Universit\'e}
\icmlauthor{Liva Ralaivola}{liva.ralaivola@lif.univ-mrs.fr}
\icmladdress{LIF, CNRS, Aix-Marseille Universit\'e}
\icmlauthor{Herv\'e Glotin}{glotin@univ-tln.fr}
\icmladdress{LSIS, CNRS, Universit\'e du Sud-Toulon-Var}

\icmlkeywords{Stochastic Optimization, Kernel Learning, Low-Rank Approximation}

\vskip 0.3in
]

\begin{abstract} 
We present a novel approach to learn a kernel-based
regression function. It is based on the use of conical
combinations of data-based parameterized kernels and on a new
stochastic convex optimization procedure of which we establish
convergence guarantees. The overall learning procedure has the nice
properties that a) the learned conical combination is automatically
designed to perform the regression task at hand and b) the updates
implicated by the optimization procedure are quite inexpensive.
In order to shed light on the appositeness of our learning strategy,
we present empirical results from experiments conducted on
various benchmark datasets.
\end{abstract}

\section{Introduction}
\label{sec:introduction}
Our goal is to learn a kernel-based regression function, 
tackling at once two problems that commonly arise with kernel methods: 
working with a kernel tailored to the task at hand {\em and}
 efficiently handling problems whose size prevents the Gram matrix from being stored in memory.
Though the present work focuses on regression, 
the material presented here might as well apply to classification. 

Compared with similar methods, we introduce two novelties. 
Firstly, we build conical combinations of rank-1 Nystr\"om approximations,
 whose weights are chosen so as to serve the regression task 
-- this makes our approach different from~\cite{Kumar09} and~\cite{Suykens02}, which focus 
on approximating the full Gram matrix with no concern for any 
specific learning task.
Secondly, to solve the
convex optimization problem entailed by our modeling choice, we provide
an original stochastic optimization procedure based on~\cite{nesterov10}. It has the
following characteristics: i) the computations of the updates are
inexpensive (thanks to the designing choice of using rank-1 approximations)
 and ii) the convergence is guaranteed. To assess
the practicality and effectiveness of our learning procedure, we
conduct a few experiments on benchmark datasets, which allow us to draw positive conclusions on the relevance of our approach.

The paper is organized as follows. Section~\ref{sec:approach} introduces some notation and our learning setting; 
in particular the optimization problem we are interested
in and the rank-1 parametrization of the kernel our approach
builds upon. Section~\ref{sec:solving} describes our new stochastic
optimization procedure, establishes guarantees of convergence and 
 details the computations to be implemented.
Section~\ref{sec:analysis} discusses the hyperparameters inherent to 
 our modeling as well as 
the complexity of the proposed algorithm. 
Section~\ref{sec:simulations} reports results from 
numerical simulations on benchmark datasets.


\section{Proposed Model}
\label{sec:approach}

{\bf Notation }
$\mathcal{X}$ is the input space,
$k : \mathcal{X} \times \mathcal{X} \to
\mathbb{R}$ denotes the (positive) kernel function we have at hand and
$\phi:\mathcal{X}\to\rkhs$ refers to the mapping
$\phi(\boldsymbol{x}):=k(\boldsymbol{x}, \cdot)$ from 
$\mathcal{X}$ to the reproducing kernel Hilbert space $\rkhs$
associated with $k$. Hence,
$k(\bfx,\bfx')\!\!=\!\!\langle\phi(\bfx),\phi(\bfx')\rangle$, with
$\langle\cdot,\cdot\rangle$ the inner product of $\rkhs$.

The training set is $\mathcal{L} :=\{
(\boldsymbol{x}_i,y_i)\}_{i=1}^n\in (\mathcal{X} \times
\mathbb{R})^n $, where $y_i$ is the target
value associated to $\bfx_i$. 
$K =(k(\boldsymbol{x}_i,\boldsymbol{x}_j))_{1\leq i,j\leq
n}\in\mathbb{R}^{n\times n}$ is the Gram matrix of $k$ with respect to
$\mathcal{L}$.
For $m=1,\ldots,n$, $\bfc_m\in\realset^n$ is defined as:
$$\bfc_m := \frac{1}{\sqrt{k(\bfx_m,\bfx_m)}}[k(\bfx_1,\bfx_m),\ldots,k(\bfx_n,\bfx_m)]^{\top}.$$

\subsection{Data-parameterized Kernels}

For $m=1,\dots,n$, $\tilde{\phi}_m : \mathcal{X} \to \tilde{\rkhs}_m$ is the mapping: 
\begin{align}
 \label{phim}
 \tilde{\phi}_m(\bfx) &:= \frac{\langle
   \phi(\bfx),\phi(\bfx_m)\rangle}{k(\bfx_m,\bfx_m)}
 \phi(\bfx_m).
\end{align}
It directly follows that
$\tilde{k}_m$ defined as, $\forall\bfx,\bfx'\in\mathcal{X},$
\begin{align*}
\tilde{k}_m&(\bfx,\bfx'):=\langle\tilde{\phi}_m(\bfx),\tilde{\phi}_m(\bfx')\rangle
=\frac{k(\bfx,\bfx_m)k(\bfx',\bfx_m)}{k(\bfx_m,\bfx_m)},
\end{align*}
is indeed a positive kernel.
Therefore, these parameterized kernels $\tilde{k}_m$ give rise to a family $(\tilde{K}_m)_{1\leq
  m\leq n}$ of Gram matrices of the following form:
\begin{equation}
\label{rk1nyst}
 \tilde{K}_m =(\tilde{k}_m(\bfx_i,\bfx_j))_{1\leq i,j\leq n}= \bfc_m \bfc_m^T,
\end{equation}
which can be seen as rank-1 Nystr\"om approximations of the full Gram matrix $K$~\cite{Drineas05,Williams01Nystrom}.

As studied in~\cite{Kumar09}, it is sensible to consider convex 
combinations of the $\tilde{K}_m$ if they are of very low
rank.  Building on this idea,  we will 
investigate the use of a parameterized
Gram matrix of the form:
\begin{equation}
 \label{ensnyst}
 \tilde{K}(\bfmu) = \sum_{m \in \mathcal{S}} \mu_m \tilde{K}_m \quad \text{with} \quad \mu_m \geq 0,
\end{equation}
where $\mathcal{S}$ is a set of indices corresponding to the specific rank-one approximations used.
Note that since we consider {\em conical} combinations of the
$\tilde{K}_m$, which are all positive
semi-definite, $\tilde{K}(\bfmu)$ is positive semi-definite as well. 

Using~\eqref{phim}, one can show that the kernel $\tilde{k}_{\bfmu}$, associated 
to our parametrized Gram matrix $\tilde{K}(\bfmu)$, is such that:
\begin{align}
 \label{ktilde}
 \tilde{k}_{\bfmu}(\bfx, \bfx') &= \langle\phi(\bfx),\phi(\bfx')\rangle_A =\phi(\bfx)^{\top}A\phi(\bfx),\\
 \text{with}\quad A:&= \sum_{m \in \mathcal{S}} \mu_m\frac{\phi(\bfx_m) \phi(\bfx_m)^{\top}}{k(\bfx_m,\bfx_m)}. 
\end{align}
In other words, our parametrization 
 induces a modified metric
in the feature space ${\cal H}$ associated to $k$.
On a side note, remark that when $\mathcal{S} = \{1\ldots,n\}$ (i.e. all the columns are picked) and we have 
uniform weights $\bfmu$, then $\tilde{K}(\bfmu) = K K^{\top}$, which
is a matrix encountered when working with the
so-called empirical kernel map~\cite{Scholkopf99}.

From now on, $M$ denotes the size of $\mathcal{S}$ and $m_0$ refers to the number of
non-zero components of $\bfmu$ (i.e. it is the 0-pseudo-norm of $\bfmu$).

\subsection{Kernel Ridge Regression}
Kernel Ridge regression (KRR) is the kernelized version of the popular
ridge regression~\cite{Hoerl70} method. The associated optimization problem reads:
\begin{equation}
\label{KRRprim}
 \min_{\boldsymbol{w}} \left\{ \lambda \|\bfw\|^2 + \sum_{i=1}^n{ \left( y_i - \langle \bfw,\phi(\bfx_i) \rangle \right) ^2} \right\},
\end{equation}
where $\lambda>0$ is a regularization parameter.

Using $\identity$ for the identity matrix, the following dual formulation may be considered:
\begin{equation}
 \label{KRRdual}
 \max_{\bfalpha\in\realset^n} \left\{F_{KRR}(\boldsymbol{\alpha}) := \boldsymbol{y}^T \boldsymbol{\alpha} - \frac{1}{4 \lambda} \boldsymbol{\alpha}^T (\lambda \identity + K) \boldsymbol{\alpha}\right\}.
\end{equation}
The solution $\bfalpha^*$ of the concave problem~\eqref{KRRdual} and the optimal solution
$\bfw^*$ of \eqref{KRRprim} are connected through the equality  
$$\bfw^*=\frac{1}{2 \lambda}\sum_{i=1}^n\alpha^*_i\phi(\bfx_i),$$
and $\boldsymbol{\alpha}^*$ can be found by setting the gradient of $F_{KRR}$ to zero, to give
\begin{equation}
 \label{alphaKRR}
 \boldsymbol{\alpha}^* = 2  ( \identity + \tfrac{1}{\lambda}K)^{-1} \boldsymbol{y}.
\end{equation}
The value of the objective function at $\bfalpha^*$ is then:
\begin{equation}
\label{objminKRR}
 F_{KRR}(\bfalpha^*) =  \boldsymbol{y}^T ( \identity + \tfrac{1}{\lambda}K)^{-1} \boldsymbol{y},
\end{equation}
and the resulting regression function is given by:
\begin{equation}
 \label{optfunc}
 f(\bfx) = \frac{1}{2 \lambda} \sum_{i=1}^n \alpha_i^* k(\bfx_i,\bfx).
\end{equation}

\subsection{A Convex Optimization Problem}
KRR may be solved by solving the linear system $(\identity +
\tfrac{K}{\lambda})\bfalpha=2\bfy$, at a cost of $O(n^3)$ operations. This
might be prohibitive for large $n$, even more so if the matrix $\identity+ \tfrac{K}{\lambda}$ does not fit into memory. To cope with this possible
problem, we work with $\tilde{K}(\bfmu)$~\eqref{ensnyst} instead of
 the Gram matrix $K$. As we shall see, this not only makes it possible to
avoid memory issues but it also allows us to set
up a learning problem where both $\bfmu$ and a regression function
are sought for at once. This is very similar to the Multiple
Kernel Learning paradigm~\cite{Rakoto08} where one learns an
optimal kernel along with the target function. 

To set up the optimization problem we are interested in, we proceed in
a way similar to~\cite{Rakoto08}. For $m=1,\ldots,n$, define the
Hilbert space  $\tilde{\rkhs}_m'$ as:
\begin{equation}
 \label{KmHS}
 \tilde{\rkhs}_m' := \left\{f \in \tilde{\rkhs}_m \left\vert \frac{\|f\|_{\tilde{\rkhs}_m}}{\mu_m} < \infty\right. \right\}.
\end{equation}

One can prove~\cite{Arons50} that $\tilde{\rkhs} = \bigoplus \tilde{\rkhs}_m'$ is the RKHS associated to $\tilde{k}=\sum\mu_m\tilde{k}_m$.
Mimicking the reasoning of~\cite{Rakoto08}, our primal optimization problem reads:
\begin{align}
\label{NKRRprim}
  \nonumber \min_{\{f_m\}, \bfmu}
  &  \left \{\lambda \sum_{m \in \mathcal{S}} \frac{1}{\mu_m} \|f_m\|_{\tilde{\rkhs}_m'}^2 + \sum_{i=1}^n (y_i - \sum_{m \in \mathcal{S}} f_m(\bfx_i))^2\right\}, \\
  & \text{s.t. } \sum_{m \in \mathcal{S}} \mu_m \leq n_1~,\quad \mu_m \geq 0,
\end{align}
where $n_1$ is a parameter controlling the 1-norm of $\bfmu$.
As this problem is also convex in $\bfmu$, using the earlier results on the KRR problem, \eqref{NKRRprim} is equivalent to:
\begin{align}
 \label{minmax}
  \min_{\bfmu \geq 0} &\left\{\max_{\boldsymbol{\alpha}}\; \boldsymbol{y}^T \boldsymbol{\alpha} - \frac{1}{4 \lambda} \boldsymbol{\alpha}^T (\lambda \identity + \tilde{K}(\bfmu)) \boldsymbol{\alpha}\right\} \nonumber \\
   = \min_{\bfmu \geq 0}&\left\{\boldsymbol{y}^T (\identity + \tfrac{1}{\lambda}\tilde{K}(\bfmu))^{-1} \boldsymbol{y}\right\}
   \text{s.t.} \sum_{m \in \mathcal{S}} \mu_m \leq n_1.
\end{align}

Finally,  using the equivalence between Tikhonov and Ivanov
regularization methods~\cite{Vasin70}, we obtain the {\em convex} and
{\em smooth} optimization problem we focus on: 
\begin{equation}
 \label{ourprob}
 \min_{\bfmu \geq 0}
\left\{F(\bfmu) :=  {\small \boldsymbol{y}^T ( \identity + }\tfrac{1}{\lambda} {\small \tilde{K}(\bfmu))^{-1} \boldsymbol{y} + 
\nu \sum_m \mu_m} \right\}.
\end{equation}

 The regression function $\tilde{f}$ is derived using \eqref{phim}, a minimizer $\bfmu^*$ of the latter problem and
the accompanying weight vector $\boldsymbol{\alpha}^*$ such that
\begin{equation}
\label{eq:alphastar}
\bfalpha^{* } = 2\left(\identity+\tfrac{1}{\lambda}\tilde{K}(\bfmu^{*})\right)^{-1} \bfy,
\end{equation} 
(obtained adapting~\eqref{alphaKRR} to the case $K=K(\bfmu^{*})$).  We have:
 \begin{align}
 \label{optappfunc}
 \nonumber\tilde{f}(\bfx)
 &= \frac{1}{2 \lambda} \sum_{i=1}^n \alpha_i^* \tilde{k}(\bfx_i,\bfx)
=\frac{1}{2 \lambda} \sum_{m \in
   \mathcal{S}} \mu_m^*\sum_{i=1}^n \alpha_i^*  \tilde{k}_m(\bfx_i,\bfx)\\
 &= \frac{1}{2 \lambda} \sum_{m \in \mathcal{S}} \tilde{\alpha}_m^* k(\bfx_m,\bfx),
 \end{align}
\begin{equation}
\label{eq:alphatilde}
\text{where}\qquad\tilde{\alpha}_m^*:=\mu_m^* \frac{\bfc_m^{\top}\bfalpha^*}{\sqrt{k(\bfx_m,\bfx_m)}}.
\end{equation}


\section{Solving the problem}
\label{sec:solving}

We now introduce a new stochastic optimization procedure to
solve~\eqref{ourprob}. It implements a coordinate descent strategy
with step sizes that use second-order information.

\subsection{A Second-Order Stochastic Coordinate Descent}

Problem~(\ref{ourprob}) is a constrained minimization based on the differentiable
 and convex objective function $F$. Usual convex optimization methods
 (such as projected gradient descent, proximal methods)  may be
 employed to solve this problem, but they may be too computationally
 expensive if $n$ is very large, which is essentially due to a
 suboptimal exploitation of the parametrization of the
 problem. Instead, the optimization strategy we propose
 is specifically tailored to take advantage of the parametrization of
 $\tilde{K}(\bfmu)$. 

Algorithm~\ref{algo:scnd} depicts our stochastic descent method, inspired by~\cite{nesterov10}.
At each iteration, a randomly chosen coordinate of $\bfmu$ is updated via a Newton step. This method has two essential features: 
i) using coordinate-wise updates of $\bfmu$ involves only partial derivatives which can be easily computed and ii) the 
stochastic approach ensures a reduced memory cost while still guaranteeing convergence.

\begin{algorithm}[!h]
    \caption{Stochastic Coordinate Newton Descent}
    \label{algo:scnd}
  \begin{algorithmic}
    \STATE {\bfseries Input:} $\bfmu^{0}$ random.
    \REPEAT
    \STATE Choose coordinate $m_k$ uniformly at random in $\setS$.
        \STATE Update :
        $\mu^{k+1}_m = \mu^{k}_m$ if $m\not = m_k$ and
  	\begin{equation}
 	   \!\!\!\!\mu^{k+1}_{m_k} \!=  \!\underset{v\geq 0}{\argmin}\partialmu{F(\bfmu^k)}{m_k}(v - \mu^k_{m_k} ) + 
            \tfrac{1}{2} \partiallmu{F(\bfmu^k)}{m_k}(v - \mu^k_{m_k} )^2,
	  \label{eq:update}
	\end{equation}
    \UNTIL{$F(\bfmu^{k}) - F(\bfmu^{k-M})<\epsilon F(\bfmu^{k-M})$}
  \end{algorithmic}
\end{algorithm}

Notice that the Stochastic Coordinate Newton Descent (SCND) is similar
to the algorithm proposed in~\cite{nesterov10}, except that we replace
the Lipschitz constants by the second-order partial derivatives
$\partiallmu{F(\mu^k)}{m_k}$. Thus, we replace a constant step-size
gradient descent by a the Newton-step in~\eqref{eq:update}, which
allows us to make larger steps.

We show that for the function $F$ in~\eqref{ourprob}, SCND does 
 provably
converge to a minimizer of Problem~\eqref{ourprob}. First, we rewrite~\eqref{eq:update} as a Newton step and compute the partial derivatives:
\begin{proposition} Eq.~\eqref{eq:update} is equivalent to
  \begin{equation}
    \label{eq:newtonstep}
    \mu^{k+1}_{m_k} = \left\{ 
    \begin{array}{l l}
    \left(\mu^{k}_{m_k} - {\partialmu{F(\mu^k)}{m_k} }/{\partiallmu{F(\bfmu^k)}{m_k} }\right)_+\text{if} \; \partiallmu{F(\bfmu^k)}{m_k}\! \not=\! 0\\
    0 \text{ otherwise.}\\
  \end{array} \right.
  \end{equation}
\end{proposition}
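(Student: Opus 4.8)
The plan is to read \eqref{eq:update} as the minimization of a univariate quadratic over a half-line and to solve it in closed form. Abbreviate $a := \partialmu{F(\bfmu^k)}{m_k}$, $b := \partiallmu{F(\bfmu^k)}{m_k}$ and $t := \mu^k_{m_k}$, so that the quantity minimized in \eqref{eq:update} is
\begin{equation*}
q(v) = a\,(v-t) + \tfrac{1}{2}\,b\,(v-t)^2, \qquad v \geq 0 .
\end{equation*}
Because $F$ is convex, its restriction to the $m_k$-th coordinate is a convex function of one variable, hence $b \geq 0$; in particular the alternative $b \neq 0$ in \eqref{eq:newtonstep} coincides with $b>0$.

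First I would dispatch the nondegenerate case $b>0$. Here $q$ is a strictly convex parabola, its unconstrained minimizer solves $q'(v)=a+b(v-t)=0$, i.e. $v=t-a/b$, and since $q$ decreases to the left of this vertex and increases to its right, the minimizer over $\{v\geq 0\}$ is the projection of the vertex onto $[0,\infty)$, namely $\left(t-a/b\right)_+$. This is exactly the first branch of \eqref{eq:newtonstep}.

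The delicate case, and the main obstacle, is $b=0$: then $q(v)=a\,(v-t)$ is affine, and its infimum over $v\geq 0$ is attained at $v=0$ only when $a\geq 0$ (otherwise it equals $-\infty$). To rule out the unbounded situation I would invoke the explicit partial derivatives of $F$. Setting $K_\lambda := \identity + \tfrac{1}{\lambda}\tilde{K}(\bfmu^k)$, which is positive definite because $\tilde{K}(\bfmu^k)\succeq 0$, and differentiating $\bfy^\top K_\lambda^{-1}\bfy + \nu\sum_m\mu_m$ with the help of $\partial\tilde{K}/\partial\mu_m=\bfc_m\bfc_m^\top$ and the matrix identity $\partial K_\lambda^{-1}/\partial\mu_m=-K_\lambda^{-1}(\partial K_\lambda/\partial\mu_m)K_\lambda^{-1}$ yields
\begin{align*}
a &= \nu - \tfrac{1}{\lambda}\bigl(\bfc_{m_k}^\top K_\lambda^{-1}\bfy\bigr)^2,\\
b &= \tfrac{2}{\lambda^2}\bigl(\bfc_{m_k}^\top K_\lambda^{-1}\bfy\bigr)^2\,\bigl(\bfc_{m_k}^\top K_\lambda^{-1}\bfc_{m_k}\bigr).
\end{align*}
Since $K_\lambda^{-1}$ is positive definite and $\bfc_{m_k}\neq\bfzero$, the factor $\bfc_{m_k}^\top K_\lambda^{-1}\bfc_{m_k}$ is strictly positive; hence $b=0$ forces $\bfc_{m_k}^\top K_\lambda^{-1}\bfy=0$ and therefore $a=\nu\geq 0$. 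The affine map $v\mapsto a(v-t)$ is then non-decreasing on $[0,\infty)$ and minimized at $v=0$, matching the second branch of \eqref{eq:newtonstep}. This establishes the equivalence.
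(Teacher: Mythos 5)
Your proof is correct and follows the same route the paper intends: the paper's one-line proof simply asserts that \eqref{eq:newtonstep} expresses the optimality conditions of the constrained univariate quadratic in \eqref{eq:update}, which is exactly what you derive by projecting the Newton vertex onto $[0,\infty)$ when the second derivative is positive. The one substantive addition on your side is the careful treatment of the degenerate case $\partiallmu{F}{m_k}=0$, where you use the explicit derivative formulas to show the coefficient of the affine objective is then $\nu>0$, so the minimizer over $v\geq 0$ is attained at $0$ rather than the problem being unbounded --- a point the paper's proof silently glosses over.
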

\begin{proof}\eqref{eq:newtonstep} gives the optimality conditions for~\eqref{eq:update}.  \end{proof}

\begin{proposition} The partial  derivatives $\ppartialmu{F(\bfmu)}{m}{p}$ are:
  \begin{align}
    \label{eq:ppartial}
    \partialmu{F(\bfmu)}{m} &= -\lambda(\bfy^{\top}\KtlmInv\bfc_m)^2 + \nu,\\
    \ppartialmu{F(\bfmu)}{m}{p} &=(-1)^p p! \lambda(\bfy^{\top}\KtlmInv\bfc_m)^2(\bfc_m^{\top}\KtlmInv\bfc_m)^{p-1},\nonumber \\
& \text{with}\ p \geq 2 \text{ and } \KtlmInv := (\lambda I + \tilde{K}(\bfmu))^{-1}\label{eq:KtlmInv}.
  \end{align}
 \end{proposition}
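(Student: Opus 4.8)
The plan is to exploit the fact that $\tilde{K}(\bfmu)=\sum_{m\in\setS}\mu_m\Ktm$ depends \emph{linearly} on each weight, so that $\partialmu{\tilde{K}(\bfmu)}{m}=\Ktm=\bfc_m\bfc_m^{\top}$ by~\eqref{rk1nyst}. First I would rewrite the objective in terms of $\KtlmInv$: since $\identity+\tfrac{1}{\lambda}\tilde{K}(\bfmu)=\tfrac{1}{\lambda}(\lambda\identity+\tilde{K}(\bfmu))$, we have $(\identity+\tfrac{1}{\lambda}\tilde{K}(\bfmu))^{-1}=\lambda\KtlmInv$, hence $F(\bfmu)=\lambda\,\bfy^{\top}\KtlmInv\bfy+\nu\sum_m\mu_m$. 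This isolates all of the $\bfmu$-dependence inside the single matrix inverse $\KtlmInv$.

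The core tool is the identity for differentiating a matrix inverse, which, combined with the linearity above, gives
\[ \partialmu{\KtlmInv}{m}=-\KtlmInv\,\Ktm\,\KtlmInv=-\KtlmInv\bfc_m\bfc_m^{\top}\KtlmInv. \]
I would then introduce the two scalars $a:=\bfy^{\top}\KtlmInv\bfc_m$ and $b:=\bfc_m^{\top}\KtlmInv\bfc_m$ (both functions of $\bfmu$, and $\KtlmInv$ is symmetric). Applying the displayed identity and reading off scalars yields the clean identities $\partialmu{a}{m}=-ab$ and $\partialmu{b}{m}=-b^{2}$. This reduction to two coupled scalar relations in $\mu_m$ is the step that turns the whole computation into bookkeeping.

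With these in hand the first derivative is immediate: differentiating $F=\lambda\,\bfy^{\top}\KtlmInv\bfy+\nu\sum_m\mu_m$ and noting that $\bfy^{\top}\KtlmInv\bfy$ has $\mu_m$-derivative $-a^{2}$ gives $\partialmu{F}{m}=-\lambda a^{2}+\nu$, which is the first line of~\eqref{eq:ppartial}. For $p\geq2$ the additive term $\nu\sum_m\mu_m$ no longer contributes, so I would prove $\ppartialmu{F}{m}{p}=(-1)^{p}p!\,\lambda a^{2}b^{p-1}$ by induction on $p$. The base case $p=2$ is $\partiallmu{F}{m}=2\lambda a^{2}b$, obtained from $\partialmu{(-\lambda a^{2})}{m}=-2\lambda a(-ab)$. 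For the inductive step I differentiate $(-1)^{p}p!\,\lambda a^{2}b^{p-1}$ with the product rule and substitute $\partialmu{a}{m}=-ab$ and $\partialmu{b}{m}=-b^{2}$; the two resulting terms combine as $-2a^{2}b^{p}-(p-1)a^{2}b^{p}=-(p+1)a^{2}b^{p}$, which upgrades both the factorial and the sign to give exactly the claim at order $p+1$.

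I do not anticipate a genuine obstacle. The only points requiring care are the $\lambda$-bookkeeping when passing from $(\identity+\tfrac{1}{\lambda}\tilde{K}(\bfmu))^{-1}$ to $\lambda\KtlmInv$ (this is what produces the single factor $\lambda$ out front rather than $1/\lambda$), and keeping the symmetry of $\KtlmInv$ in mind so that $\bfy^{\top}\KtlmInv\bfc_m$ and $\bfc_m^{\top}\KtlmInv\bfy$ may be identified. Everything else is driven mechanically by the two scalar relations.
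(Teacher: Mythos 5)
Your proposal is correct, and it carries out precisely the ``easy but tedious calculations'' that the paper's one-line proof omits: rewriting $F$ as $\lambda\,\bfy^{\top}\KtlmInv\bfy+\nu\sum_m\mu_m$, differentiating the inverse via $\partialmu{\KtlmInv}{m}=-\KtlmInv\bfc_m\bfc_m^{\top}\KtlmInv$, and closing the induction with the scalar relations $\partialmu{a}{m}=-ab$, $\partialmu{b}{m}=-b^2$. The induction step correctly combines $-2a^2b^{p}-(p-1)a^2b^{p}=-(p+1)a^2b^{p}$, so the claimed formula holds for all $p\geq 2$; nothing is missing.
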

\begin{proof}Easy but tedious calculations give the results.\end{proof}
\begin{theorem}[Convergence]
  \label{th:cv}
  For any sequence  $\{m_k\}_k$, the sequence $\{F(\bfmu^k)\}_k$ verifies: 
  \begin{enumerate}[(a)]\itemsep0pt
   \item $ \forall k,\,F(\bfmu^{k+1})\leq F(\bfmu^k).$
   \item $ \lim_{k\to\infty} F(\bfmu^k) = \min_{\bfmu\geq0} F(\bfmu).$
  \end{enumerate} 

 Moreover, if there exists a minimizer $\bfmu^*$ of $F$ such that the
 Hessian $\nabla^2F(\bfmu^*)$ is positive definite then:
  \begin{enumerate}[(a)]\itemsep0pt
    \setcounter{enumi}{2}
    \item $\bfmu^*$ is the unique minimizer of  $F$.
     The sequence $\{\bfmu^k\}$ converges to   $\bfmu^*$: $||\bfmu^k\!-\!\bfmu^*||{\to}0$.
  \end{enumerate} 
\end{theorem}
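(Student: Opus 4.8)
The plan is to reduce the whole statement to a one-dimensional analysis along the coordinate $m_k$ updated at step $k$, which the rank-one structure of $\tilde{K}_{m_k}=\bfc_{m_k}\bfc_{m_k}^{\top}$ makes tractable. Writing $B:=\lambda\identity+\tilde{K}(\bfmu^k)-\mu^k_{m_k}\bfc_{m_k}\bfc_{m_k}^{\top}$ for the part of $\lambda\identity+\tilde{K}$ that does not depend on the active coordinate and applying the Sherman--Morrison formula, the restriction $g(v):=F\!\left(\bfmu^k+(v-\mu^k_{m_k})\bfe_{m_k}\right)$ becomes $g(v)=\mathrm{const}-\tfrac{\lambda a^2 v}{1+bv}+\nu v$ with $a:=\bfc_{m_k}^{\top}B^{-1}\bfy$ and $b:=\bfc_{m_k}^{\top}B^{-1}\bfc_{m_k}\ge0$. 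This is consistent with the derivative formulas: $g$ is convex on $[0,\infty)$ because $\partiallmu{F}{m_k}=2\lambda(\bfy^{\top}\KtlmInv\bfc_{m_k})^2(\bfc_{m_k}^{\top}\KtlmInv\bfc_{m_k})\ge0$, and summing $\ppartialmu{F}{m_k}{p}=(-1)^p p!\,\lambda(\bfy^{\top}\KtlmInv\bfc_{m_k})^2(\bfc_{m_k}^{\top}\KtlmInv\bfc_{m_k})^{p-1}$ as a geometric series gives the exact remainder of the second-order model in closed form.

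For part~(a) I would compare $g$ with the quadratic model $q(v):=g(\mu^k_{m_k})+\partialmu{F(\bfmu^k)}{m_k}(v-\mu^k_{m_k})+\tfrac12\partiallmu{F(\bfmu^k)}{m_k}(v-\mu^k_{m_k})^2$ that the algorithm minimises over $v\ge0$. The geometric-series identity above shows that the remainder $q(v)-g(v)$ reduces to a single explicit term whose denominator is positive on the feasible set and whose sign is that of $(v-\mu^k_{m_k})^3$. Hence $q$ overestimates $g$ exactly when the step increases the coordinate, and in that regime $g(\mu^{k+1}_{m_k})\le q(\mu^{k+1}_{m_k})\le q(\mu^k_{m_k})=g(\mu^k_{m_k})$ yields descent at once. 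I expect the real obstacle to be the complementary case, where the Newton step decreases $\mu_{m_k}$ or is truncated at $v=0$: there the model underestimates $g$, so descent cannot be read off the upper bound and must instead be extracted from the convexity of $g$ and the fact that $\mu^{k+1}_{m_k}$ is the exact constrained minimiser of $q$. Ensuring that the locally fitted Newton curvature does not overshoot the one-dimensional minimiser is, I believe, the crux of the theorem.

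Granting~(a), part~(b) follows a standard template. The sequence $\{F(\bfmu^k)\}$ is non-increasing and bounded below, since $\identity+\tfrac1\lambda\tilde{K}(\bfmu)\succeq\identity$ forces the first term to be non-negative and $\nu\sum_m\mu_m\ge0$; hence it converges. To identify its limit with $\min_{\bfmu\ge0}F$, I would characterise the fixed points of the update: a coordinate is stationary iff it satisfies $\partialmu{F}{m}\ge0$ with $\mu_m\,\partialmu{F}{m}=0$. Since the coordinate is drawn uniformly at random, every coordinate is revisited infinitely often almost surely, and the summability of the per-step decreases forces these coordinate-wise optimality conditions to hold in the limit. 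As they are exactly the KKT conditions of the smooth convex programme~\eqref{ourprob} over the box $\{\bfmu\ge0\}$, any limit point is a global minimiser.

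Finally, for~(c)--(d): positive definiteness of $\nabla^2F(\bfmu^*)$ makes $F$ strictly convex near $\bfmu^*$, which with global convexity gives uniqueness of the minimiser, establishing~(c). For~(d) I would use that $F(\bfmu)\ge\nu\sum_m\mu_m$ is coercive on $\{\bfmu\ge0\}$, so the iterates, having non-increasing objective, stay in a compact sublevel set; every convergent subsequence then has a limit that is a minimiser by~(b), hence equals the unique $\bfmu^*$ by~(c), so the entire sequence converges to $\bfmu^*$.
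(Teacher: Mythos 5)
Your reduction of the statement to the one--dimensional restriction $g(v)=\mathrm{const}+\tfrac{\lambda a^2}{b}\cdot\tfrac{1}{1+bv}+\nu v$ is exactly the computation that underlies the paper's argument, and your treatment of (b)--(d) (monotone bounded objective, coercivity via $F(\bfmu)\ge\nu\sum_m\mu_m$, coordinate-wise KKT conditions at cluster points, uniqueness from local positive definiteness plus convexity) matches the paper's sketch, modulo the standing caveat that every coordinate must be visited infinitely often. The genuine gap is precisely where you located it: part (a) in the regime where the Newton step decreases $\mu_{m_k}$ or is truncated at $0$. You prove descent only when $\mu^{k+1}_{m_k}\ge\mu^{k}_{m_k}$ and defer the other case to ``convexity of $g$ plus exact minimisation of $q$,'' but you never supply that argument --- and you cannot borrow it from the paper, because the paper's proof of (a) rests on the assertion that $\partialllmu{F(\bfmu)}{m}\le 0$ makes the second-order Taylor model $q$ a quadratic \emph{upper} bound of $F$ along the coordinate. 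The integral form of the Taylor remainder (or your own geometric-series identity, which gives $q(v)-g(v)=\lambda\tilde a^2\tilde b^2(v-\mu^k_{m_k})^3/(1+\tilde b(v-\mu^k_{m_k}))$ with $\tilde a=\bfy^{\top}\KtlmInv\bfc_{m_k}$, $\tilde b=\bfc_{m_k}^{\top}\KtlmInv\bfc_{m_k}$) shows that $g'''\le 0$ yields $g\le q$ only on the half-line $v\ge\mu^k_{m_k}$ and the \emph{reverse} inequality on $v<\mu^k_{m_k}$; the paper silently invokes the majorization exactly where it fails.

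Moreover, your closed form shows the missing inequality is false in general, so the gap is not one of presentation. Setting $u=1+bv$, $A=\lambda a^2/b$, $c=\nu/b$, one has $g=A/u+cu+\mathrm{const}$ on $u\ge1$ with unconstrained minimiser $u_{\mathrm{opt}}=\sqrt{A/c}$, and $g(tu_{\mathrm{opt}})=\sqrt{Ac}\,(t+1/t)+\mathrm{const}$. The (affine-covariant) Newton step from $u_0=\rho u_{\mathrm{opt}}$ lands at $u_1=\tfrac{\rho(3-\rho^2)}{2}u_{\mathrm{opt}}$, and since $t+1/t$ is symmetric under $t\mapsto 1/t$, one gets $g(u_1)>g(u_0)$ iff $u_0u_1<u_{\mathrm{opt}}^2$, i.e.\ iff $\sqrt2<\rho<\sqrt3$; taking $u_{\mathrm{opt}}$ large keeps $u_1\ge1$, so the step is feasible and untruncated, yet the objective strictly increases. (The truncated step $u_0\mapsto 1$ can likewise increase $g$, since $g(1)-g(u_0)=(u_0-1)(A/u_0-c)>0$ whenever $1<u_0<u_{\mathrm{opt}}^2$.) Hence part (a) as stated requires either an additional invariant ensuring the iterates never satisfy $1+b\mu^k_{m_k}>\sqrt2\,u_{\mathrm{opt}}$, or a modified update --- e.g.\ exact coordinate minimisation, which your expression for $g$ delivers in closed form as $u=\max(1,\sqrt{A/c})$, or a majorizing curvature such as $\sup_{0\le v\le\mu^k_{m_k}}\partiallmu{F}{m_k}$ for decreasing steps. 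Parts (b)--(d) of your plan are otherwise sound but inherit this dependence on (a) through the monotonicity and boundedness of the iterates.
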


\begin{proof}[Sketch of proof]
  \begin{enumerate}[(a)]
    \item Using that $ \ppartialmu{F(\bfmu)}{m}{3}\leq 0$ (see ~\eqref{eq:ppartial}), one shows that the Taylor series truncated to the second order: 
$\bfv \to F(\bfmu) + \partialmu{F(\bfmu)}{m}(\bfv_{m} - \bfmu_{m} ) + \tfrac{1}{2}\partiallmu{F(\bfmu)}{m}( \bfv_m - \bfmu_{m}  )^2, $
is a quadratic upper-bound of $F$ that matches $F$ and $\nabla F$ at point $\bfmu$ (for any fixed $m$ and $\bfmu$).
From this, the update formula~\eqref{eq:update} yields $F(\bfmu^{k+1})\leq F(\bfmu^{k})$.

     \item First note that $||\bfmu^k||\leq F(\bfmu^0)$ and extract a converging subsequence $\{\bfmu^{\phi(k)}\}$. Denote the limit by $\hat{\bfmu}$. Separating the cases where $ \partiallmu{F(\hat{\bfmu})}{m}$ is zero or not, one shows that  $\hat{\bfmu}$ satisfies the optimality conditions: 
$\langle\nabla F(\hat{\bfmu}),\bfv -\hat{\bfmu} \rangle \geq 0, \,\forall \bfv\geq 0$. Thus $\hat{\bfmu}$ is a minimizer of $F$ and we have $\lim F(\bfmu^k)=\lim F(\bfmu^{\phi(k)}) = F(\hat{\bfmu}) =  \min_{\bfmu\geq0} F(\bfmu)$.

    \item 
   is standard in convex optimization.
  \end{enumerate}
  \vspace*{-6mm}
\end{proof}

\subsection{Iterative Updates}
\label{subsec:updates}
One may notice that the computations of the derivatives~\eqref{eq:ppartial}, as well as the computation of $\boldsymbol{\alpha}^*$, depend on $\KtlmInv$.
Moreover, the dependency in $\bfmu$, for all those quantities, only lies in $\KtlmInv$.
Thus, a special care need be taken on how $\KtlmInv$ is stored and updated throughout.

Let $\mathcal{S}_{\bfmu}^+ = \left\{m \in \mathcal{S} \vert \mu_m >
  0\right\}$ and $m_0=\|\bfmu\|_0=|\mathcal{S}_{\bfmu}^+|$.
Let $C=[\bfc_{i_1}\cdots\bfc_{i_{m_0}}]$ be the concatenation of the $\bfc_{i_j}$'s, for $i_j \in
\mathcal{S}_{\bfmu}^+$ and $D$ the diagonal matrix with diagonal
elements $\mu_{i_j}$, for $i_j \in
\mathcal{S}_{\bfmu}^+$. Remark that throughout the iterations
  the sizes of $C$ and $D$ may vary.
Given~\eqref{eq:KtlmInv} and using Woodbury formula (Theorem~\ref{th:woodbury},
Appendix), we have:
\begin{equation}
 \label{Kinv}
 \begin{aligned}
 \KtlmInv
 =\big(\lambda \identity + C D C^{\top}\big)^{-1} = \frac{1}{\lambda} \identity - \frac{1}{\lambda^2} C G C^{\top}
\end{aligned}
\end{equation}
\begin{equation}
 \label{invM}
\text{with}\qquad G := \Big(D^{-1} + \frac{1}{\lambda} C^{\top} C\Big)^{-1}.
\end{equation}
Note that $G$ is a square matrix of order $m_0$ and that an update on $\bfmu$ will require an update on $G$.
Even though updating $G^{-1}$, i.e. $D^{-1}+ \frac{1}{\lambda} C^{\top} C$, is trivial, it is more efficient to directly
store and update $G$. This is what we describe now.

At each iteration, only one coordinate of $\bfmu$ is updated.
Let $p$ be the index of the updated coordinate, $\bfmu_{\old}$,
$C_{\old}$, $D_{\old}$ and $G_{\old}$, the 
vectors and matrices before the update and $\bfmu_{\new}$, $C_{\new}$,
$D_{\new}$ and $G_{\new}$ the updated matrices/vectors.
Let  also  $\boldsymbol{e}_p$  bethe vector whose $p$th
coordinate is $1$ while other coordinates are $0$. 
We encounter four different cases.
\paragraph{Case 1:  $\mu_p^{\old} = 0$ and $\mu_p^{\new} =
  0$.} No update needed:
\begin{equation}
\label{eq:noupdate}
\Gnew = \Gold.
\end{equation}

\paragraph{Case 2: $\mu_p^{\old} \neq 0$ and $\mu_p^{\new}
  \neq 0$.} Here, $C_{\old} = C_{\new}$
 and
$$D_{\new}^{-1} = D_{\old}^{-1} +
\Delta_p\bfe_p\bfe_p^{\top}, \quad\text{where}\quad\Delta_p:=\frac{1}{\mu_p^{\new}} - \frac{1}{\mu_p^{\old}}.$$
Then, using Woodbury formula, we have:
\begin{equation}
 \label{updneqneq}
 G_{\new}
 = \Big(G_{\old}^{-1} + \Delta_p\bfe_p\bfe_p^{\top}\Big)^{-1}
 = G_{\old} - \frac{\Delta_p}{1 +\Delta_pg_{pp}} \bfg_p\bfg_p^{\top},
\end{equation}
with $g_{pp}$ the $(p,p)$th entry of $G_{\old}$ and
$\bfg_p$ its $p$th column.

\paragraph{Case 3: $\mu_p^{\old} \neq 0$ and $\mu_p^{\new} = 0$.} Here, $\mathcal{S}_{\bfmu_{\new}}^+ = \mathcal{S}_{\bfmu_\old}^+ \setminus \{p\}$.
It follows that we have to remove $\bfc_p$ from $C_{\old}$ to have $C_{\new}$.
To get $G_{\new}$, we may consider the previous
update formula when $\mu_p^{\new}\to 0$ (that is, when $\Delta_p\to +\infty$). Note that we can use the
previous formula because $\mu_p \mapsto
\KtlmInv$ is well-defined and continuous at $0$.
Thus, as
$\lim_{\mu_p^{\new}\to 0}\frac{\Delta_p}{1+\Delta_pg_{pp}} = \frac{1}{g_{pp}} ,$
we have:
\begin{equation}
 \label{updneqeq}
 \begin{aligned}
 G_{\new}
 &= \left(G_{\old}- \frac{1}{g_{pp}}\bfg_p\bfg_p^{\top}\right)_{\setminus\{p\}},
\end{aligned}
\end{equation}
where $A_{\setminus\{p\}}$ denotes the matrix $A$ from which the $p$th column
and $p$th row have been removed. 

\paragraph{Case 4: $\mu_p^{\old} = 0$ and $\mu_p^{\new}
  \neq 0$.} We have $C_{\new} = [C_{\old}\; \bfc_p\big]$.
Using~\eqref{invM}, it follows that
\begin{equation*}
 \begin{aligned}
 G_{\new}&=\left(\begin{array}{cc}D_{\old}^{-1}+\frac{1}{\lambda}C_{\old}^{\top}C_{\old} & \frac{1}{\lambda} C_{\old}^{\top} \bfc_p \\ \frac{1}{\lambda} \bfc_p^{\top} C_{\old} & \frac{1}{\mu_p^{\new}} + \frac{1}{\lambda} \bfc_p^{\top} \bfc_p \end{array}\right)^{-1}
\\
&=\left(\begin{array}{cc} G_{\old}^{-1} & \frac{1}{\lambda} C_{\old}^{\top} \bfc_p \\ \frac{1}{\lambda} \bfc_p^{\top} C_{\old} & \frac{1}{\mu_p^{\new}} + \frac{1}{\lambda} \bfc_p^{\top} \bfc_p \end{array}\right)^{-1}\\
 &=\begin{pmatrix}A & \bfv \\ \bfv^{\top} & s\end{pmatrix},
\end{aligned}
\end{equation*}
where, using the block-matrix inversion formula of
Theorem~\ref{th:inversionadd} (Appendix), we have:
\begin{align}
s &= \left(\frac{1}{\mu_p^{\new}} + \frac{1}{\lambda} \bfc_p^{\top} \bfc_p - \frac{1}{\lambda^2} \bfc_p^{\top} C_{\old} G_{\old}C_{\old}^{\top} \bfc_p\right)^{-1}\notag\\
 \boldsymbol{v} &= -\frac{s}{\lambda} G_{\old}C_{\old}^{\top} \bfc_p \label{updinvM0neq0}
\\
 \boldsymbol{A} &= G_{\old}+ \frac{1}{s} \boldsymbol{v} \boldsymbol{v}^{\top}\notag.
\end{align}

\noindent{\bf Complete learning algorithm.} Algorithm~\ref{alg:slrk}
depicts the full Stochastic Low-Rank
Kernel Learning  algorithm (\SLRKL), which recollects all the pieces just described.

\begin{algorithm}[tb]
    \caption{\SLRKL: Stochastic Low-Rank Kernel Learning }
    \label{alg:slrk}
  \begin{algorithmic}
    \STATE {\bfseries inputs:} ${\cal L}:=\{(\bfx_i,y_i)\}_{i=1}^n$,
    $\nu>0$, $M>0$, $\epsilon>0$.
    \STATE {\bf outputs:} $\bfmu$, $G$ and $C$ (yield $(\lambda\identity+K(\mu))^{-1}$ from~\eqref{Kinv}).
    \STATE ~

    \STATE{\bf initialization:} $\bfmu^{(0)}={\bf 0}.$
    \REPEAT
    \STATE Choose coordinate $m_k$ uniformly at random in $\setS$.
    \STATE Update $\bfmu^{(k)}$ according to~\eqref{eq:newtonstep}, by
    changing only the $m_k$-th coordinate $\mu^{k}_{m_k}$ of
    $\bfmu^{(k)}$:
    \begin{itemize}\itemsep-\baselineskip
    \item compute the second order derivative
      $$h = \lambda(\bfy^{\top}\KtlmInv\bfc_{m_k})^2(\bfc_{m_k}^{\top}\KtlmInv\bfc_{m_k})\ ;$$
    \item {\bf if}
      $h>0$
      {\bf then}
      \begin{equation*}
      \small
      \mu_{m_k}^{(k+1)} = \max \left(0, \mu_{m_k}^{(k)}+\frac{\lambda(\bfy^{\top}\KtlmInv\bfc_{m_k})^2 - \nu}{h}\right);
      \end{equation*}
      {\bf else}
      $\qquad\mu_{m_k}^{(k+1)}= 0.$
    \end{itemize}
    \STATE Update $G^{(k)}$ and $C^{(k)}$ according to~\eqref{eq:noupdate}-\eqref{updinvM0neq0}.
  \UNTIL{$F(\bfmu^{k}) - F(\bfmu^{k-M})<\epsilon F(\bfmu^{k-M})$}
  \end{algorithmic}
\end{algorithm}


\section{Analysis}
\label{sec:analysis}
Here, we discuss the relation between $\lambda$ and $\nu$ and we argue that there is no need to
keep both hyperparameters. In addition, we provide
a short analysis on the runtime complexity of our learning procedure.

\subsection{Pivotal Hyperparameter $\lambda\nu$}
First recall that we are interested in the minimizer $\bfmu_{\lambda,\nu}^*$ of constrained optimization
problem~\eqref{ourprob}, i.e.:
\begin{equation}
\label{eq:mustar}
\bfmu_{\lambda,\nu}^*=\argmin_{\bfmu\geq 0}F_{\lambda,\nu}(\bfmu),
\end{equation}
where, for the sake of clarity, we purposely show the dependence on $\lambda$ and $\nu$ of the
objective function $F_{\lambda,\nu}$
\begin{align}
F_{\lambda, \nu}(\bfmu)
&=\bfy^{\top} \left(\identity
+ \tilde{K}\left(\tfrac{\bfmu}{\lambda}\right)\right)^{-1} \bfy + \lambda\nu
\sum_m \tfrac{\mu_m}{\lambda},\label{eq:flambdanu}
\end{align}
We may name
$\bfalpha_{\lambda,\nu}^*$, $\tilde{\bfalpha}_{\lambda,\nu}^*$ the
weight vectors associated with $\bfmu_{\lambda,\nu}^*$
(see~\eqref{eq:alphastar} and~\eqref{eq:alphatilde}).
We have the following: 
\begin{proposition}
\label{prop:lambdanu}
Let $\lambda, \nu, \lambda', \nu'$ be strictly positive real numbers. If
$\lambda\nu=\lambda'\nu'$ then
$$\bfmu_{\lambda',\nu'}^*=\tfrac{\lambda'}{\lambda}\bfmu_{\lambda,\nu}^{*},
\quad\text{and}\quad\tilde{f}_{\lambda,\nu}=\tilde{f}_{\lambda', \nu'}.$$
As a direct consequence:
$$\forall \lambda,\nu\geq 0,\;
\tilde{f}_{\lambda,\nu}=\tilde{f}_{1,\lambda\nu}.$$
\end{proposition}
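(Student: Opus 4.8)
The plan is to exploit the special form of $F_{\lambda,\nu}$ exhibited in~\eqref{eq:flambdanu} through the change of variable $\boldsymbol{\eta} := \bfmu/\lambda$. Since $\tilde{K}$ is linear in its argument (by~\eqref{ensnyst}), we have $\tfrac{1}{\lambda}\tilde{K}(\bfmu) = \tilde{K}(\boldsymbol{\eta})$, and the penalty term reads $\lambda\nu\sum_m \tfrac{\mu_m}{\lambda} = \lambda\nu\sum_m \eta_m$. Thus
$$F_{\lambda,\nu}(\bfmu) = \bfy^{\top}(\identity + \tilde{K}(\boldsymbol{\eta}))^{-1}\bfy + (\lambda\nu)\sum_m \eta_m =: G_{\lambda\nu}(\boldsymbol{\eta}),$$
where the right-hand side depends on the pair $(\lambda,\nu)$ only through the product $\lambda\nu$. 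The first key step is therefore to observe that minimizing $F_{\lambda,\nu}$ over $\bfmu\geq 0$ is, via the bijection $\bfmu\mapsto\bfmu/\lambda$ of the positive orthant onto itself, equivalent to minimizing $G_{\lambda\nu}$ over $\boldsymbol{\eta}\geq 0$.

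Next I would denote by $\boldsymbol{\eta}^*_c$ a minimizer of $G_c$ with $c := \lambda\nu$, which by construction depends only on $c$. Unwinding the change of variable gives $\bfmu^*_{\lambda,\nu} = \lambda\,\boldsymbol{\eta}^*_{\lambda\nu}$. Consequently, whenever $\lambda\nu = \lambda'\nu'$, both $\bfmu^*_{\lambda,\nu}$ and $\bfmu^*_{\lambda',\nu'}$ are rescalings of the \emph{same} vector $\boldsymbol{\eta}^*_c$, and eliminating $\boldsymbol{\eta}^*_c$ between the two relations yields the claimed identity $\bfmu^*_{\lambda',\nu'} = \tfrac{\lambda'}{\lambda}\bfmu^*_{\lambda,\nu}$.

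For the regression functions I would trace the dependence through~\eqref{eq:alphastar}, \eqref{optappfunc} and~\eqref{eq:alphatilde}. The weight vector $\bfalpha^* = 2(\identity + \tilde{K}(\bfmu^*/\lambda))^{-1}\bfy = 2(\identity + \tilde{K}(\boldsymbol{\eta}^*_c))^{-1}\bfy$ depends only on $c = \lambda\nu$; call it $\bfalpha^*_c$. Substituting $\mu^*_m = \lambda\,\eta^*_{c,m}$ and this $\bfalpha^*_c$ into~\eqref{optappfunc}--\eqref{eq:alphatilde}, the explicit prefactor $\tfrac{1}{2\lambda}$ cancels against the $\lambda$ coming from $\mu^*_m$, leaving
$$\tilde{f}_{\lambda,\nu}(\bfx) = \tfrac{1}{2}\sum_{m\in\setS}\eta^*_{c,m}\,\frac{\bfc_m^{\top}\bfalpha^*_c}{\sqrt{k(\bfx_m,\bfx_m)}}\,k(\bfx_m,\bfx),$$
an expression that again depends on $(\lambda,\nu)$ only through $c=\lambda\nu$. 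Hence $\tilde{f}_{\lambda,\nu} = \tilde{f}_{\lambda',\nu'}$ as soon as $\lambda\nu = \lambda'\nu'$, and the stated consequence follows by taking $\lambda'=1$, $\nu'=\lambda\nu$.

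The argument is essentially bookkeeping once the scaling $\boldsymbol{\eta} = \bfmu/\lambda$ is in place, so I expect no genuine obstacle. The one point demanding care is the simultaneous appearance of $\lambda$ inside the resolvent (through $\tfrac{1}{\lambda}\tilde{K}$) and as the explicit $\tfrac{1}{2\lambda}$ prefactor of $\tilde{f}$: one must verify these two occurrences cancel exactly, which is precisely what makes $\tilde{f}$ invariant rather than merely covariant under the rescaling. A minor subtlety is that uniqueness of the minimizer is only guaranteed under the positive-definite-Hessian hypothesis of Theorem~\ref{th:cv}; if several minimizers exist, the first identity should be read as a correspondence between the minimizer sets, which the change of variable delivers verbatim.
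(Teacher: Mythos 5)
Your proof is correct and follows essentially the same route as the paper's: your change of variable $\boldsymbol{\eta}=\bfmu/\lambda$ with canonical objective $G_{\lambda\nu}$ is just a repackaging of the paper's direct identity $F_{\lambda,\nu}(\bfmu)=F_{\lambda',\nu'}(\tfrac{\lambda'}{\lambda}\bfmu)$, and your tracking of $\bfalpha^*$ (invariant) and the cancellation of the $\tfrac{1}{2\lambda}$ prefactor against the scaling of $\bfmu^*$ matches the paper's argument via $\tilde{\bfalpha}^*_{\lambda',\nu'}=\tfrac{\lambda'}{\lambda}\tilde{\bfalpha}^*_{\lambda,\nu}$. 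Your closing caveat about uniqueness of the minimizer is a fair observation that applies equally to the paper's own proof.
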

\begin{proof}
Suppose that we know $\bfmu_{\lambda,\nu}^*$. Given the
definition~\eqref{eq:flambdanu} of $F_{\lambda,\nu}$ and using $\lambda\nu=\lambda'\nu'$, we 
have 
$$F_{\lambda,\nu}(\bfmu)=F_{\lambda',\nu'}\left(\tfrac{\lambda'}{\lambda}\bfmu\right)$$
Since the only
constraint of problem~\eqref{eq:mustar} is the nonnegativity
 of the components of $\bfmu$, it directly follows that
$\lambda'\bfmu_{\lambda,\nu}^*/\lambda$ is a minimizer of
$F_{\lambda',\nu'}$ (under these constraints), hence
$\bfmu^*_{\lambda',\nu'}=\lambda'\bfmu_{\lambda,\nu}^*/\lambda$.

To show $\tilde{f}_{\lambda,\nu}=\tilde{f}_{\lambda',\nu'}$, it
suffices to observe that, according to the way $\bfalpha_{\lambda,\nu}^*$ is defined
(cf. ~\eqref{eq:alphastar}),
\begin{align*}
\bfalpha_{\lambda',\nu'}^*&=2\left(\identity+K\left(\tfrac{\bfmu_{\lambda',\nu'}^*}{\lambda'}\right)\right)^{-1}\bfy\\
&=2\left(\identity+K\left(\tfrac{\lambda'}{\lambda}\tfrac{\bfmu_{\lambda,\nu}^*}{\lambda'}\right)\right)^{-1}\bfy=\bfalpha_{\lambda,\nu}^*,
\end{align*}
and, thus,
$\tilde{\bfalpha}_{\lambda',\nu'}^*=\lambda'\tilde{\bfalpha}_{\lambda,\nu}^*/\lambda.$
The definition~\eqref{optappfunc} of $\tilde{f}_{\lambda,\nu}$ then gives
$\tilde{f}_{\lambda,\nu}=\tilde{f}_{\lambda',\nu'}$, which entails $\tilde{f}_{\lambda,\nu}=\tilde{f}_{1,\lambda\nu}$.
\end{proof}
This proposition has two nice consequences. First, it says that the
pivotal hyperparameter is actually the product $\lambda\nu$: this {\em
  is} the quantity that parametrizes the learning problem (not
$\lambda$ or $\nu$, seen independently).
Thus, the set of regression functions, defined by the $\lambda$ and $\nu$ 
hyperparameter space, can be described by exploring the set of vectors 
$(\bfmu_{1,\nu}^*)_{\nu>0}$, which only depends on a single parameter.
Second, considering  $(\bfmu_{1,\nu}^*)_{\nu>
0}$ allows us to work with the family of objective functions $(F_{1,\nu})_{\nu>0}$, which are 
well-conditioned numerically as the hyperparameter $\lambda$ is set to $1$.

\subsection{Runtime Complexity and Memory Usage}
\label{complexanalysis}
For the present analysis, let us assume that we pre-compute the
$M$ (randomly) selected columns $\bfc_1,\ldots,\bfc_M$.
If $a$ is the cost of computing a column $\bfc_m$, the
pre-computation has a cost of $O(Ma)$ and has a memory usage of $O(n
M)$.

At each iteration, we have to compute the first and second-order
derivatives of the objective function, as well as its value and the
weight vector $\bfalpha$.
Using \eqref{Kinv}, \eqref{eq:ppartial}, \eqref{ourprob} and \eqref{eq:alphastar},
one can show that those operations have a complexity of $O(n m_0)$ if $m_0$ is
the zero-norm of $\bfmu$.

Besides, in addition to $C$, we need to store $G$ for a memory cost of $O(m_0^2)$.
Overall, if we denote the number of iterations by $k$, the algorithm has a memory 
cost of $O(n M + m_0^2)$ and a complexity of $O(k n m_0 + M a)$.

If memory is a critical issue, one may prefer to compute the columns
$\bfc_m$ on-the-fly and
$m_0$ columns need to be stored instead of $M$ (this might be a
substantial saving in terms of memory as can be seen in the next section).
This improvement in term of memory usage implies an additive cost in
the runtime complexity.
In the worst case, we have to compute a new column $\bfc$ at each iteration.
The resulting memory requirement scales as $O(n m_0 + m_0^2)$ and
the runtime complexity varies as $O(k (n m_0 + a))$.

\section{Numerical Simulations}
\label{sec:simulations}

We now present results from various numerical experiments, for which
we describe the datasets and the protocol  used.
We study the influence of the
different parameters of our learning approach on the results and  compare the performance of our algorithm to that of related methods.

\subsection{Setup}
First, we use a toy dataset (denoted by \emph{sinc}) to better understand the role and influence of the parameters.
It consists in regressing the cardinal sine of the two-norm
(i.e. $\bfx\mapsto\sin(\|\bfx\|)/\|\bfx\|$) of
random two-dimensional points, each drawn uniformly between $-5$ and $+5$.
In order to have a better idea on how the solutions may or may not
over-fit the training data, we add some white Gaussian noise on the
target variable of the randomly picked 1000 training points (with a 10 dB signal-to-noise ratio). The test set is
made of 1000 non-noisy independent instance/target pairs.

We then assess our method on two UCI datasets: Abalone
(\emph{abalone}) and Boston Housing (\emph{boston}), using
the same normalizations, Gaussian kernel parameters ($\sigma$ denotes the kernel width) and data partition
as in~\cite{Smola00}.  The United States Postal Service (\emph{USPS})
dataset is used with
the same setting as in \cite{Williams01Nystrom}. Finally, the Modified National Institute of Standards and Technology (\emph{MNIST}) dataset 
is used with the same pre-processing as in~\cite{Maji09}.
Table~\ref{datasets} summarizes the characteristics of all the datasets we used.

\begin{table}[!h]
\caption{Datasets used for the experiments.\label{datasets}}
\centering
\begin{tabular}{ccccc}
\toprule 
{\bf dataset} & {\bf \#features} & {\bf \#train ($n$)} & {\bf \#test} & $\boldsymbol{\sigma}^2$\\
\midrule
\emph{sinc} & 2 & 1000 & 1000 & 1\\
\emph{abalone} & 10 & 3000 & 1177 & $2.5$\\
\emph{boston} & 13 & 350 & 156 & $3.25$\\
\emph{USPS} & 256 & 7291 & 2007 & 64\\
\emph{MNIST} & 2172 & 60000 & 10000 & $4$\\
\bottomrule
\end{tabular}
\end{table}

As displayed in Algorithm~\ref{algo:scnd}, at each iteration $k > M$, we check if $F(\bfmu^{k}) - F(\bfmu^{k-M})<\epsilon F(\bfmu^{k-M})$ holds.
If so, we stop the optimization process.
$\epsilon$ thus controls our stopping criterion.
In the experiments, we set $\epsilon = 10^{-4}$ unless otherwise stated and we set $\lambda$ to $1$ 
for all the experiments and we run simulations for various values of $\nu$ and $M$.
In order to assess the variability incurred by the stochastic nature
of our learning algorithm, we run each
experiment 20 times.

\subsection{Influence of the parameters}
\subsubsection{Evolution of the objective}
\begin{figure}[tb]
\centering
 \includegraphics[scale=0.18]{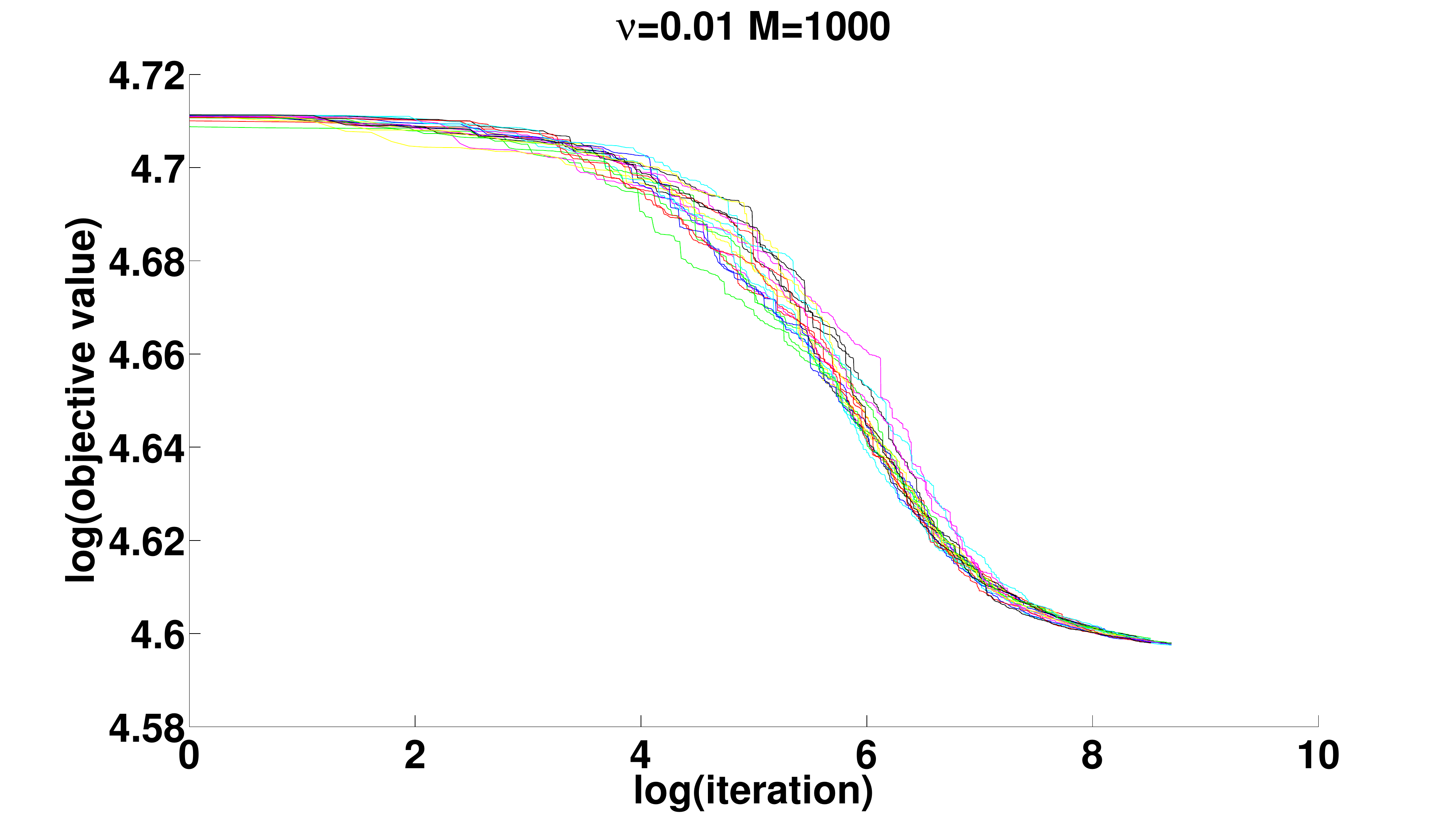}
\caption{Evolution of the objective during the optimization process for the \emph{sinc} dataset with $\nu = 0.01$, $M = 1000$ (for 20 runs).}
\label{objdec}
\end{figure}
We have established (Section~\ref{sec:solving}) the
convergence of our optimization procedure, under mild
conditions. A question that we have not tackled yet is to evaluate its
convergence rate. Figure~\ref{objdec} plots the evolution of the objective function on
the {\em sinc} dataset. We observe that the evolutions of the
objective function are impressively similar among the different
runs. This empirically tends to assert that it is relevant to
look for theoretical results on the convergence rate.

A question left for future work is the
impact of the random selection of the set of columns $\mathcal{S}$ on the  reached
solution.

\subsubsection{Zero-norm of $\bfmu$}
\begin{figure}[tb]
\centering
 \includegraphics[scale=0.18]{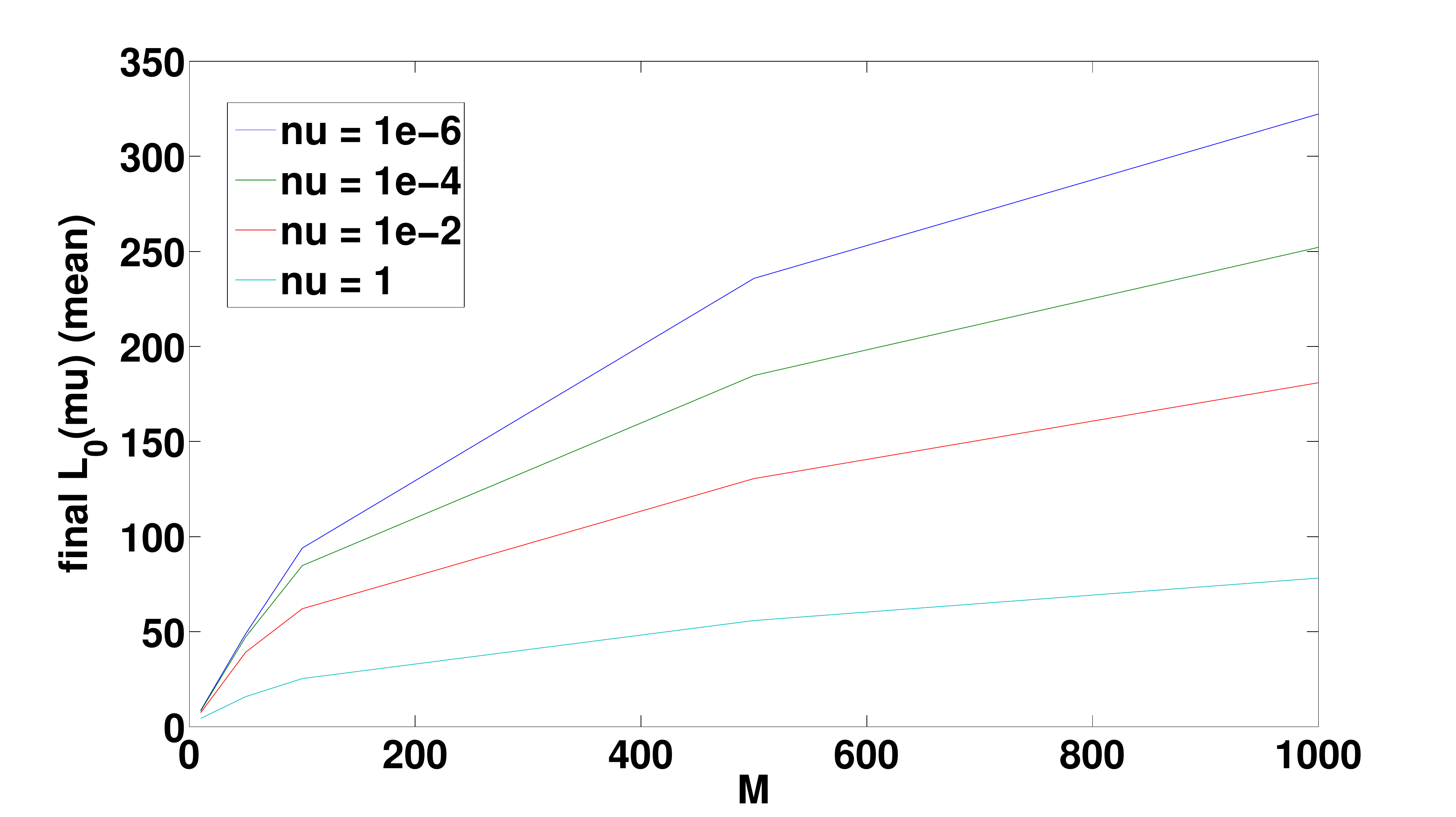}
\caption{Zero-norm of the optimal $\bfmu^*$ as a function of $M$ for different values of $\nu$ for the \emph{sinc} dataset (averaged on 20 runs).}
\label{n0final}
\end{figure}
\begin{figure}[tb]
 \centering
 \includegraphics[scale=0.18]{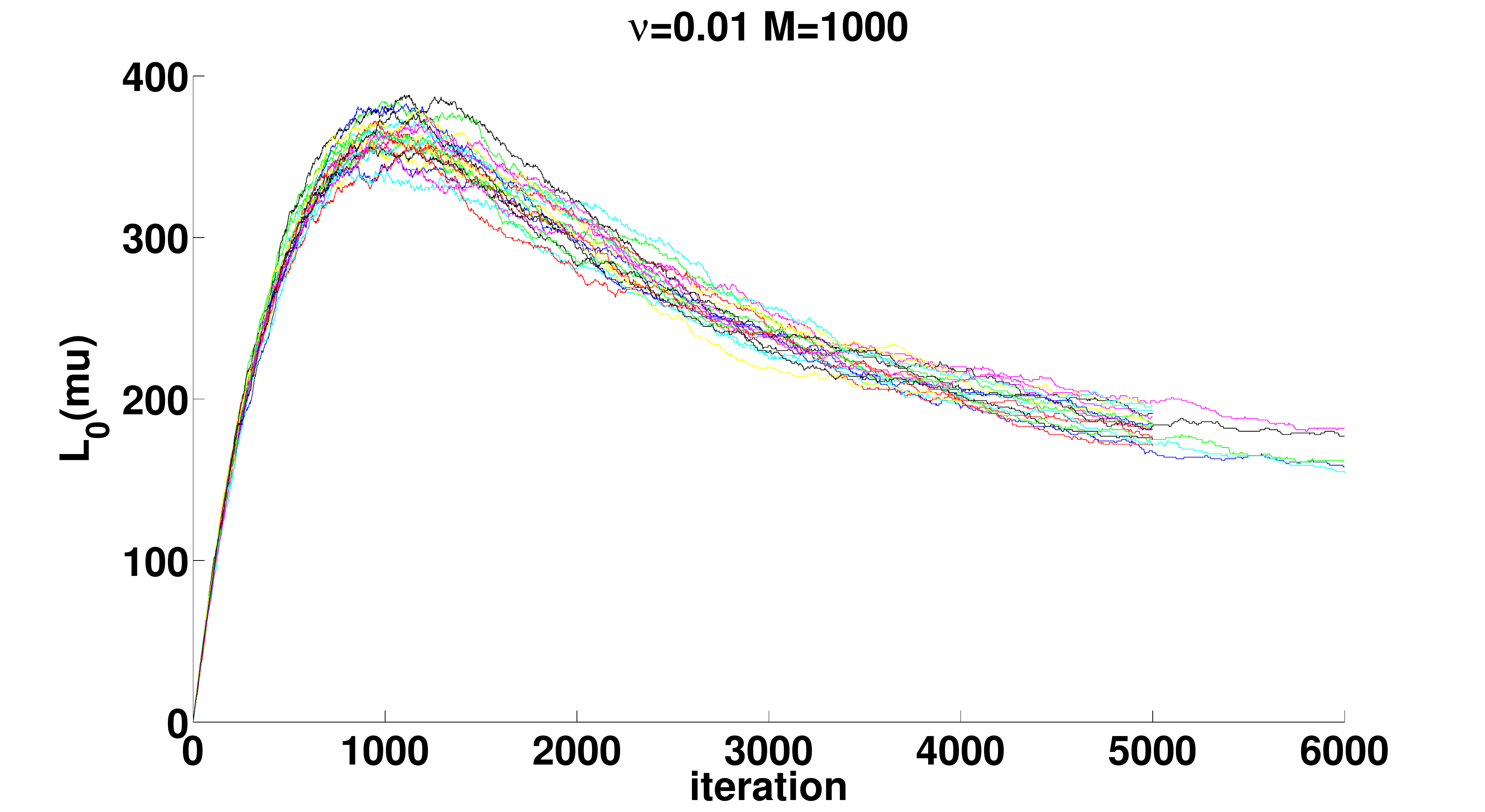}
\caption{Evolution of the zero-norm of $\bfmu$ ($m_0$) with the iterations
 for the \emph{sinc} dataset with $\nu = 0.01$, $M = 1000$ (20 runs).}
\label{evon0}
\end{figure}
As shown in Section~\ref{complexanalysis}, both memory usage and the complexity of the algorithm depend on $m_0$.
Thus, it is interesting to take a closer look at how this quantity evolves.
Figure~\ref{n0final} and~\ref{evon0} experimentally point out two things. On the one hand, the number of active components $m_0=\|\bfmu\|_0$ 
remains significantly smaller than $M$.
In other words, as long as the regularization parameter is well-chosen, we never have to store all of 
the $\bfc_m$ at the same time.
On the other hand, the solution $\bfmu^*$ is sparse and $\|\bfmu^*\|_0$ grows with $M$ and diminishes with $\nu$.
A theoretical study on the dependence of $\bfmu^*$ and $m_0$ in $M$ and $\nu$, left for future work, would be all the more interesting
since sparsity is the cornerstone on which the scalability of our algorithm depends.

\subsection{Comparison to other methods}
This section aims at giving a hint on how our method performs on regression tasks.
To do so, we compare the Mean Square Error (over the test set). 
In addition to our Stochastic Low-Rank Kernel Learning method (\emph{SLKL}), we solve 
the problem with the standard Kernel Ridge Regression method, using the $n$ training data 
(\emph{KRRn}) and using only $M$ training data (\emph{KRRM}).
We also evaluate the performance of the KRR method, using the kernel obtained with uniform 
weights on the $M$ rank-1 approximations selected for \emph{SLKL}~(\emph{Unif}).
The results are displayed in Table~\ref{compres}, where the bold font indicates the best low-rank method 
(\emph{KRRM, Unif} or \emph{SLKL}) for each experiment.

\begin{table*}[tb]
\centering
\caption{Mean square error with standard deviation measured on three regression tasks.}
\begin{footnotesize}
\begin{tabular}{ccccccccccccccccccc}
\toprule
  && \multicolumn{3}{c}{\emph{sinc}} && \multicolumn{3}{c}{\emph{boston}}&&\multicolumn{3}{c}{\emph{abalone}}\\
 $M$&&$256$&$512$&$1000$&&$128$&$256$&$350$&&$512$&$1024$&$3000$\\
\cmidrule{1-1}\cmidrule{3-5}\cmidrule{7-9}\cmidrule{11-13}
 \emph{KRRn}&&\multicolumn{3}{c}{$0.009 \pm0 9$}&&\multicolumn{3}{c}{$10.17 \pm0$}&&\multicolumn{3}{c}{$6.91 \pm0$}\\
\cmidrule{1-1}\cmidrule{3-5}\cmidrule{7-9}\cmidrule{11-13}
 \multirow{2}{*}{\emph{KRRM}}&&$0.0146$&$0.0124$&$\mathbf{0.0099}$&&$33.27$&$16.89$&$\mathbf{10.17}$&&$6.14$&$5.51$&$5.25$\\
 &&$\pm1e^{-3}$&$\pm7e^{-4}$&$\pm0$&&$\pm7.8$&$\pm3.27$&$\pm0$&&$\pm0.25$&$\pm0.09$&$\pm0$\\
\cmidrule{1-1}\cmidrule{3-5}\cmidrule{7-9}\cmidrule{11-13}
 \multirow{2}{*}{\emph{Unif}}&&$0.0124$&$0.0124$&$0.0124$&&$149.7$&$147.84$&$147.72$&&$10.04$&$9.96$&$9.99$\\
 &&$\pm1e^{-4}$&$\pm3e^{-5}$&$\pm0$&&$\pm5.57$&$\pm2.24$&$\pm0$&&$\pm0.17$&$\pm0.06$&$\pm0$\\
\cmidrule{1-1}\cmidrule{3-5}\cmidrule{7-9}\cmidrule{11-13}
 \multirow{2}{*}{\emph{SLKL}}&&$\mathbf{0.0106}$&$\mathbf{0.0103}$&$0.0104$&&$\mathbf{20.17}$&$\mathbf{13.1}$&$11.43$&&$\mathbf{5.04}$&$\mathbf{4.94}$&$\mathbf{4.95}$\\
 &&$\pm4e^{-4}$&$\pm2e^{-4}$&$\pm1e^{-4}$&&$\pm2.3$&$\pm0.87$&$\pm0.06$&&$\pm0.08$&$\pm0.03$&$\pm0.004$\\
$m_0$&&$83$&$108$&$139$&&$108$&$161$&$184$&&$159$&$191$&$253$\\
\bottomrule
\end{tabular}
\end{footnotesize}
\label{compres}
\end{table*}

Table \ref{compres} confirms that optimizing the weight vector $\bfmu$ is decisive as our 
results dramatically outperform those of \emph{Unif}.
As long as $M < n$, our method also outperforms \emph{KRRM}.
The explanation probably lies in the fact that our approximations keep information about similarities 
between the $M$ selected points and the $n-M$ others.
Furthermore, our method~\emph{SLKL} achieves comparable performances (or even better on \emph{abalone}) 
than \emph{KRRn}, while finding sparse solutions.
Compared to the approach from \cite{Smola00}, we seem to achieve lower test error on the \emph{boston} 
dataset even for $M=128$. On the \emph{abalone} dataset, this method outperforms ours for every M we tried.

Finally, we also compare the results we obtain on the \emph{USPS} dataset with the ones obtained 
in~\cite{Williams01Nystrom} (\emph{Nyst}). 
As it consists in a classification task, we actually perform a regression on the labels to adapt our 
method, which is known to be equivalent to solving Fisher Discriminant Analysis~\cite{Duda73}.
The performance achieved by \emph{Nyst} outperforms ours.
However, one may argue that the performance have a same order of magnitude and note that
the \emph{Nyst} approach focuses on the  classification task, while ours was
designed for regression.
\begin{table}[!h]
\centering
\caption{Number of errors and standard deviation on the test set (2007 examples) of the USPS dataset.}
\begin{footnotesize}
\begin{tabular*}{\columnwidth}{ccccccc}
\toprule 
$M$&&$64$&&$256$&&$1024$\\
\cmidrule{1-1}\cmidrule{3-3}\cmidrule{5-5}\cmidrule{7-7}
\emph{Nyst}
 &&$101.3 \pm22.9$&&$34.5 \pm3.0$&&$35.9 \pm2.0$\\
\cmidrule{1-1}\cmidrule{3-3}\cmidrule{5-5}\cmidrule{7-7}
\emph{SLKL}
 &&$76.3 \pm9.9$&&$47.6 \pm3.1$&&$41.5 \pm3.9$\\
 $m_0$&&$61$&&$210$&&$515$\\
\bottomrule
\end{tabular*}
\end{footnotesize}
 \label{compresSeeger}
\end{table}

\subsection{Large-scale dataset}
To assess the scalability of our method, we ran  experiments on the larger 
handwritten digits \emph{MNIST} dataset, whose training set is made of $60000$ examples. We used a Gaussian 
kernel computed over histograms of oriented gradients as in~\cite{Maji09}, in a ``one versus all'' setting. 
For $M\!=\!1000$, we obtained classification error rates around $2\%$ over the test set, which do not compete with 
state-of-the-art results but achieve reasonable performance, considering that we use only a small part of the data (cf. the size of $M$) and that our method was designed for regression.

Although our method overcomes memory usage issues for such large-scale problems, it still is 
computationally intensive. In fact, a large number of iterations is spent picking coordinates 
whose associated weight remains at $0$. Though those iterations do not induce any update, they do 
require computing the associated Gram matrix column (which is not stored as it does not 
weigh in the conic combination) as well as the derivatives of the objective function. The main focus 
of our future work is to avoid those computations, using e.g. techniques such as shrinkage~\cite{Hsieh08}.


\section{Conclusion}
\label{sec:conclusion}
We have presented an original kernel-based learning procedure for
regression. The main features of our contribution are the use of a
conical combination of data-based kernels and the derivation of a
stochastic convex optimization procedure, that acts coordinate-wise
and makes use of second-order information. We provide
theoretical convergence guarantees for this optimization procedure,
we depict the behavior of our learning procedure and illustrate its
effectiveness  through a number of numerical experiments carried out
on several benchmark datasets.

The present work naturally raises several questions. Among them, we may
pinpoint that of being able to
establish precise rate of convergence for the stochastic optimization
procedure and that of generalizing our approach to the use of several
kernels. Establishing data-dependent generalization
bounds taking advantage of either the one-norm constraint on $\bfmu$
or the size $M$ of the kernel combination is of primary importance to
us. The connection established between the one-norm hyperparameter $\nu$ and the ridge parameter
$\lambda$, in section~\ref{sec:analysis}, seems interesting and may be witnessed in~\cite{Rakoto08}.
Although not been mentioned so far, there might be
connections between our modeling strategy and
boosting/leveraging-based optimization
procedures.
Finally, we plan on generalizing our approach to other kernel
methods, noting that 
rank-1 update formulas as those proposed here can possibly be exhibited even for problems with no closed-form solution.

\subsection*{Acknowledgments}
This work is partially supported by the IST Program of the European Community, under the FP7 Pascal 2 Network of
Excellence (ICT-216886-NOE) and by the ANR project LAMPADA (ANR-09-EMER-007).
\appendix

\section{Matrix Inversion Formulas}
\label{sec:matrixinversion}
\begin{theorem}{\em (Woodbury matrix in\-version formu\-la~\cite{Woodbury50})}
\label{th:woodbury}
Let $n$ and $m$ be positive integers, $A\in\realset^{n\times n}$ and $C\in\realset^{m\times m}$ be non-singular matrices and let $U\in\realset^{n\times m}$ and $V\in\realset^{m\times n}$ be two matrices.
If $C^{-1} \!+\! VA^{-1}U$ is non-singular then so is $A\!+\!UCV$ and:
$$(A+UCV)^{-1}=A^{-1}-A^{-1}U(C^{-1} + VA^{-1}U)^{-1}VA^{-1}.$$
\end{theorem}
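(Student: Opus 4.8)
The plan is to prove the identity by direct verification: I will exhibit the right-hand side as a genuine right inverse of $A+UCV$ and then invoke squareness to conclude it is the two-sided inverse. To that end I would set
\[
W := (C^{-1} + VA^{-1}U)^{-1}, \qquad B := A^{-1} - A^{-1}UWVA^{-1}.
\]
Both $A^{-1}$ and $W$ are well defined under the hypotheses ($A$ non-singular, and $C^{-1}+VA^{-1}U$ non-singular), so $B$ is a well-defined $n\times n$ matrix, and it suffices to show $(A+UCV)B = \identity$.

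First I would expand the product term by term, using $AA^{-1}=\identity$:
\[
(A+UCV)B = \identity - UWVA^{-1} + UCVA^{-1} - UC(VA^{-1}U)WVA^{-1}.
\]
Factoring $U$ out on the left and $VA^{-1}$ out on the right of the last three summands collapses this to
\[
(A+UCV)B = \identity + U\big[\,{-W} + C - C(VA^{-1}U)W\,\big]VA^{-1},
\]
so the entire claim reduces to showing that the bracketed $m\times m$ matrix vanishes.

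The key algebraic step — the only one that is not bookkeeping — is to rewrite $C$ through the definition of $W$. Since $W^{-1} = C^{-1}+VA^{-1}U$, one has $CW^{-1} = \identity + CVA^{-1}U$, and hence $C = (\identity + CVA^{-1}U)W$. Substituting this into the bracket yields
\[
-W + (\identity + CVA^{-1}U)W - C(VA^{-1}U)W = -W + W + CVA^{-1}UW - CVA^{-1}UW = 0,
\]
so the bracket is indeed zero and $(A+UCV)B = \identity$.

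Finally, because $A+UCV$ is square, exhibiting $B$ as a right inverse simultaneously establishes that $A+UCV$ is non-singular and that $B$ is its two-sided inverse, which is exactly the asserted formula. I expect no real obstacle: the argument is entirely constructive, with the single delicate point being the factorization $C=(\identity+CVA^{-1}U)W$ that produces the cancellation. A reader preferring to avoid it could instead verify $B(A+UCV)=\identity$ by the mirror-image manipulation $W^{-1}C = \identity + VA^{-1}UC$.
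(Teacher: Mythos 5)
Your verification is correct. The paper does not actually prove Theorem~\ref{th:woodbury} --- it is stated as a classical result with a citation to Woodbury (1950) --- so there is no in-paper argument to compare against. Your computation is the standard direct check: the expansion of $(A+UCV)B$ is right, the factorization $C=(\identity+CVA^{-1}U)W$ (equivalently $CW^{-1}=\identity+CVA^{-1}U$) is exactly the cancellation that makes the bracket vanish, and the final step --- that a square matrix with a right inverse is non-singular with that right inverse as its two-sided inverse --- is valid over a field and correctly delivers the non-singularity claim of the theorem, not just the formula. The hypotheses are used where they should be: invertibility of $A$ and of $C^{-1}+VA^{-1}U$ to define $B$ and $W$. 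No gaps.
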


\begin{theorem}{\em (Matrix inversion with added column)}
\label{th:inversionadd}
Given $m$, integer and $M\in\realset^{(n+1)\times (n+1)}$  partitioned as:
 $$M=\begin{pmatrix}A & \bfb\\ \bfb^{\top} & c\end{pmatrix},\quad
\text{where } A\in\realset^{n\times n}, \bfb\in\realset^{n} \text{ and  } c\in\realset.$$
If $A$ is non-singular and $c-\bfb^{\top}A^{-1}\bfb\neq 0$, then $M$
is non-singular and the inverse of $M$ is given by
\begin{equation}
M^{-1}=\begin{pmatrix}A^{-1}+\frac{1}{k}A^{-1}\bfb\bfb^{\top}A^{-1}
  & -\frac{1}{k}A^{-1}\bfb\\
-\frac{1}{k}\bfb^{\top}A^{-1}&\frac{1}{k}\end{pmatrix},
\label{eq:inversionadd}
\end{equation}
where $k=c-\bfb^{\top}A^{-1}\bfb$.
\end{theorem}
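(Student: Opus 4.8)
The quantity $k=c-\bfb^{\top}A^{-1}\bfb$ appearing in the statement is precisely the Schur complement of the block $A$ in $M$, and this is the structure I would exploit. The plan is to produce, in one stroke, both the non-singularity of $M$ and the closed form for $M^{-1}$ by means of the block $LDU$ (Schur-complement) factorization
\begin{equation*}
M=\begin{pmatrix}\identity & \bfzero\\ \bfb^{\top}A^{-1} & 1\end{pmatrix}\begin{pmatrix}A & \bfzero\\ \bfzero^{\top} & k\end{pmatrix}\begin{pmatrix}\identity & A^{-1}\bfb\\ \bfzero^{\top} & 1\end{pmatrix}.
\end{equation*}
First I would check this identity by multiplying the three factors from right to left; the only nontrivial entry is the bottom-right one, which collapses to $\bfb^{\top}A^{-1}\bfb+k=c$ by the definition of $k$. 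This verification uses only that $A^{-1}$ exists.

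Since the outer factors are unit triangular (determinant $1$), the factorization gives $\det M=\det A\cdot k$. Both $\det A\neq 0$ (as $A$ is non-singular) and $k\neq0$ hold by hypothesis, so $M$ is non-singular, which establishes the first claim. To read off $M^{-1}$, I would invert each factor separately: the middle factor inverts entrywise to $\diag(A^{-1},1/k)$, while each unit-triangular factor is inverted simply by flipping the sign of its single off-diagonal block. Multiplying the inverses in reverse order yields
\begin{equation*}
M^{-1}=\begin{pmatrix}\identity & -A^{-1}\bfb\\ \bfzero^{\top} & 1\end{pmatrix}\begin{pmatrix}A^{-1} & \bfzero\\ \bfzero^{\top} & 1/k\end{pmatrix}\begin{pmatrix}\identity & \bfzero\\ -\bfb^{\top}A^{-1} & 1\end{pmatrix},
\end{equation*}
and carrying out these two block products reproduces the four blocks of~\eqref{eq:inversionadd} exactly, with the $(1,1)$ block acquiring the rank-one correction $\tfrac{1}{k}A^{-1}\bfb\bfb^{\top}A^{-1}$.

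As a sanity check and as an alternative route, one can instead verify the formula directly by forming the product of $M$ with the candidate inverse on the right and confirming block-by-block that it equals $\identity$ (the four resulting block identities are short). Moreover, in the generic case $c\neq0$ the $(1,1)$ block above is exactly the inverse of the Schur complement $A-\tfrac{1}{c}\bfb\bfb^{\top}$ of $c$, and Theorem~\ref{th:woodbury} (applied with the rank-one update $-\tfrac{1}{c}\bfb\bfb^{\top}$) confirms it equals $A^{-1}+\tfrac{1}{k}A^{-1}\bfb\bfb^{\top}A^{-1}$, providing an independent cross-check. I do not anticipate a genuine obstacle here: the argument is routine block algebra, and the only point demanding care is to keep track of the two separate non-singularity hypotheses ($A$ invertible and $k\neq0$), since both are needed — the first to define the factorization and the second to invert its diagonal part.
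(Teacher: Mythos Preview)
Your proof is correct and complete. The paper, however, states Theorem~\ref{th:inversionadd} in the appendix as a standard utility result \emph{without proof}, so there is no argument of the paper's own to compare against. Your Schur-complement $LDU$ factorization is the canonical route: it simultaneously yields $\det M=\det A\cdot k$ (hence non-singularity) and the block inverse, and your cross-check via Theorem~\ref{th:woodbury} on the $(1,1)$ block is a nice touch. Nothing needs to be changed.
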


\begin{small}
\bibliographystyle{icml2011}
\bibliography{nystromregression}

\begin{thebibliography}{15}
\providecommand{\natexlab}[1]{#1}
\providecommand{\url}[1]{\texttt{#1}}
\expandafter\ifx\csname urlstyle\endcsname\relax
  \providecommand{\doi}[1]{doi: #1}\else
  \providecommand{\doi}{doi: \begingroup \urlstyle{rm}\Url}\fi

\bibitem[Aronszajn(1950)]{Arons50}
Aronszajn, N.
\newblock Theory of reproducing kernels.
\newblock \emph{Transactions of the American Mathematical Society}, 68\penalty0
  (3):\penalty0 337--404, May 1950.

\bibitem[Drineas \& Mahoney(2005)Drineas and Mahoney]{Drineas05}
Drineas, P. and Mahoney, M.
\newblock On the nystr{\"o}m method for approximating a gram matrix for
  improved kernel-based learning.
\newblock \emph{J. of Machine Learning Research}, 6:\penalty0 2153--2175, Dec.
  2005.

\bibitem[Duda \& Hart(1973)Duda and Hart]{Duda73}
Duda, Richard~O. and Hart, Peter~E.
\newblock \emph{Pattern Classification and Scene Analysis}.
\newblock John Wiley and Sons, 1973.

\bibitem[Hoerl \& Kennard(1970)Hoerl and Kennard]{Hoerl70}
Hoerl, A. and Kennard, R.
\newblock Ridge regression: applications to nonorthogonal problems.
\newblock \emph{Technometrics}, 12\penalty0 (1):\penalty0 69--82, 1970.

\bibitem[Hsieh et~al.(2008)Hsieh, Chang, Lin, Keerthi, and
  Sundararajan]{Hsieh08}
Hsieh, C.-J., Chang, K.-W., Lin, C.-J., Keerthi, S.~Sathiya, and Sundararajan,
  S.
\newblock A dual coordinate descent method for large-scale linear svm.
\newblock In \emph{Proceedings of the 25th International Conference on Machine
  Learning}, pp.\  408--415, 2008.

\bibitem[Kumar et~al.(2009)Kumar, Mohri, and Talwalkar]{Kumar09}
Kumar, S., Mohri, M., and Talwalkar, A.
\newblock Ensemble nystr{\"o}m method.
\newblock In \emph{Advances in Neural Information Processing Systems 22}, pp.\
  1060--1068, 2009.

\bibitem[Maji \& Malik(2009)Maji and Malik]{Maji09}
Maji, S. and Malik, J.
\newblock Fast and accurate digit classification.
\newblock Technical report, EECS Department, UC Berkeley, 2009.

\bibitem[Nesterov(2010)]{nesterov10}
Nesterov, Y.
\newblock Efficiency of coordinate descent methods on huge-scale optimization
  problems.
\newblock Core discussion papers, 2010.

\bibitem[Rakotomamonjy et~al.(2008)Rakotomamonjy, Bach, Canu, and
  Grandvalet]{Rakoto08}
Rakotomamonjy, A., Bach, F., Canu, S., and Grandvalet, Y.
\newblock Simplemkl.
\newblock \emph{J. of Machine Learning Research}, 9:\penalty0 2491--2521, 2008.

\bibitem[Sch{\"o}lkopf et~al.(1999)Sch{\"o}lkopf, Mika, Burges, Knirsch,
  M{\"u}ller, R{\"a}tsch, and Smola]{Scholkopf99}
Sch{\"o}lkopf, B., Mika, S., Burges, C. J.~C., Knirsch, P., M{\"u}ller, K.-R.,
  R{\"a}tsch, G., and Smola, A.~J.
\newblock Input space versus feature space in kernel-based methods.
\newblock \emph{IEEE Transactions on Neural Networks}, 10\penalty0
  (5):\penalty0 1000--1017, September 1999.

\bibitem[Smola \& Sch{\"o}lkopf(2000)Smola and Sch{\"o}lkopf]{Smola00}
Smola, A.~J. and Sch{\"o}lkopf, B.
\newblock Sparse greedy matrix approximation for machine learning.
\newblock In \emph{International Conference on Machine Learning}, pp.\
  911--918, 2000.

\bibitem[Suykens et~al.(2002)Suykens, Gestel, Brabanter, Moor, and
  Vandewalle]{Suykens02}
Suykens, Johan A.~K., Gestel, Tony~Van, Brabanter, Jos~De, Moor, Bart~De, and
  Vandewalle, Joos.
\newblock \emph{Least Squares Support Vector Machines}, chapter~6, pp.\
  173--182.
\newblock World Scientific, 2002.

\bibitem[Vasin(1970)]{Vasin70}
Vasin, V.~V.
\newblock Relationship of several variational methods for the approximate
  solution of ill-posed problems.
\newblock \emph{Mathematical Notes}, 7\penalty0 (3):\penalty0 161--166, 1970.

\bibitem[Williams \& Seeger(2001)Williams and Seeger]{Williams01Nystrom}
Williams, C. and Seeger, M.
\newblock Using the nystr{\"o}m method to speed up kernel machines.
\newblock In \emph{Advances in Neural Information Processing Systems 13}, pp.\
  682--688. MIT Press, 2001.

\bibitem[Woodbury(1950)]{Woodbury50}
Woodbury, M.~A.
\newblock Inverting modified matrices.
\newblock Technical report, Statistical Research Group, Princeton University,
  1950.

\end{thebibliography}
\end{small}
\end{document}